\newtheorem{thrm}{Theorem}
\newtheorem*{prf*}{Proof}
\def\Cc{\mathcal{C}}
\DeclareMathOperator{\expect}{\mathbb{E}}
\DeclareMathOperator{\fdp}{\mathrm{FDP}}
\DeclareMathOperator{\fdr}{\mathrm{FDR}}
\DeclareMathOperator{\tpr}{\mathrm{TPR}}
\title{False Discovery Rate Control for Gaussian Graphical Models via Neighborhood Screening}
\name{Taulant Koka, Jasin Machkour, Michael Muma \thanks{T. Koka (taulant.koka@tu-darmstadt.de) and M. Muma (michael.muma @tu-darmstadt.de) have been funded by the ERC Starting Grant ScReeningData under grant number 101042407. J. Machkour (jasin.machkour@tu-darmstadt.de) has been supported by the LOEWE initiative (Hesse, Germany) within the emergenCITY center.}}
\address{Robust Data Science Group - Technische Universität Darmstadt\\64283 Darmstadt, Germany}
\begin{document}

\maketitle

\begin{abstract}
Gaussian graphical models emerge in a wide range of fields. They model the statistical relationships between variables as a graph, where an edge between two variables indicates conditional dependence. Unfortunately, well-established estimators, such as the graphical lasso or neighborhood selection, are known to be susceptible to a high prevalence of false edge detections. False detections may encourage inaccurate or even incorrect scientific interpretations, with major implications in applications, such as biomedicine or healthcare. In this paper, we introduce a nodewise variable selection approach to graph learning and provably control the false discovery rate of the selected edge set at a self-estimated level. A novel fusion method of the individual neighborhoods outputs an undirected graph estimate. The proposed method is parameter-free and does not require tuning by the user. Benchmarks against competing false discovery rate controlling methods in numerical experiments considering different graph topologies show a significant gain in performance.
\end{abstract}

\begin{keywords}
False discovery rate control, Gaussian graphical models, neighborhood selection, T-Rex selector, structure learning.
\end{keywords}

\vspace{-10pt}
\section{Introduction}
\vspace{-10pt}
\label{sec:intro}
Graphical structures describe various complex phenomena in diverse fields, such as biology and healthcare \cite{Bessonneau2021,Iqbal2016}, climate science \cite{Zerenner2014}, or psychology and social sciences\cite{Norbury2020,Bhushan2019}. Moreover, by defining signals on the vertices of a graph $\mathcal{G}=(\mathcal{V},\mathcal{E})$ with vertex set $\mathcal{V}=\{1,\dots,p\}$ and edge set $\mathcal{E}\subseteq\mathcal{V}^2$ rather than on Euclidean space, the field of graph signal processing (GSP) provides powerful tools for the analysis of signals that can exceed the performance of classical approaches \cite{Ortega2018,Mateos2019}. However, the graph $\mathcal{G}$ is often not fully observable {\em a priori}, thus requiring its inference from noisy observations made at its vertices\cite{Maretic2017, Mei2016, Giannakis2018}. Consequently, errors arising from the graph's inference \cite{Miettinen2021} may lead to a serious issue in terms of interpretability and reproducibility. Such errors may encourage misguided or even incorrect scientific conclusions, which could severely impact, e.g., safety-critical applications like disease spread estimation \cite{Shirley2005} or neuroimaging \cite{Drakesmith2015}. 

\vspace{-1pt}
Many applications rely on undirected graphical models, where the nodes are represented by random variables $X_1,\dots,X_p$. For such a graph, $\mathcal{E}$ describes the conditional independence relationships between $X_1, \dots, X_p$. 
Learning $\mathcal{E}$ relies on the fact that every $(i,j)\in\mathcal{E}$ (with $i\neq j$) implies that 
$X_i$ and $X_j$ are dependent given $\{X_\ell:\ell \in \mathcal{V}\backslash\{i,j\}\}$. The joint distribution of $X_1,\dots,X_p$ is then said to be Markovian with respect to $\mathcal{G}$. In particular, the $p$-dimensional Gaussian distribution $\mathcal{N}_p(\bm \mu,\bm \Sigma)$ with mean $\bm\mu$ and covariance matrix $\bm \Sigma$ satisfies the Markov property, resulting in the well-known Gaussian graphical model (GGM). The GGM's peculiarity is that the graph $\mathcal{G}$ is encoded in its precision matrix $\bm \Omega = \bm \Sigma^{-1}$, i.e., $(i,j)\notin\mathcal{E}$ with $i \neq j$ if and only if $\bm \Omega_{i j}=0$. This property is exploited in many estimators that depend on partial correlation testing \cite{Drton2004} or enforce a sparse structure on $\bm\Omega$, e.g., in the graphical lasso (GLasso) \cite{Yuan_GLasso_2007},\cite{Friedman_GLasso_2007} or in neighborhood selection \cite{Meinshausen_Graphs_2006}. These methods are however unable to control the rate of falsely discovered edges in $|\widehat{\mathcal{E}}|$.

\vspace{-1pt}
We therefore aim to find an estimator $\widehat{\mathcal{E}}$ that controls the false discovery rate: $\fdr\coloneqq\expect[\fdp]$, provided that $n$ i.i.d. observations of $ (X_1,\dots,X_p)$ are available. The $\fdr$ is the expected value of the false discovery proportion: $\fdp \coloneqq V/(R\vee1)$, where $V\coloneqq|\widehat{\mathcal{E}}\backslash\mathcal{E}|$ is the number of falsely discovered edges, $R=|\widehat{\mathcal{E}}|$ (with $|\cdot|$ being the cardinality of a set) is the number of selected edges, and $R\vee1 = \max\{R,1\}$ \cite{BH_FDR_1995}. We say that the $\fdr$ is controlled at level $\alpha$ if $\fdr \leq \alpha$ for some $\alpha\in[0,1]$. At the same time, our goal is to achieve a high true positive rate ($\tpr$), i.e., statistical power. It is defined as $\tpr\coloneqq\expect[S/|\mathcal{E}|]$, where $S=|\widehat{\mathcal{E}}\cap\mathcal{E}|$ is the number of correctly selected edges. 
Research addressing $\fdr$ control in graphical models remains limited \cite{Weidong2013, Maathuis_GKF_2021, Yu_FDR_bio_2021, Li2008}, with several existing methods facing challenges related to computational complexity and reduced power for small sample sizes. 

\vspace{-1pt}
Therefore, we introduce a novel approach for $\fdr$ control in Gaussian Graphical Models (GGMs). Our method builds upon the recently introduced Terminating-Random Experiments (T-Rex) framework for fast high-dimensional $\fdr$ control in variable selection\cite{machkour_trex_2022,machkour_screen_2023}, which offers provable $\fdr$ control for the estimated edge set in GGMs. An implementation of the proposed method is available in the open-source R package `TRexSelector'\cite{machkour2022TRexSelector}.
\vspace{-10pt}
\section{Methodology}
 \label{sec:meth}
 \vspace{-10pt}
 Estimation of GGMs is frequently done by partial correlation testing\cite{Drton2004}, penalized maximum likelihood estimation (GLasso)\cite{Yuan_GLasso_2007,Friedman_GLasso_2007}, or nodewise variable selection (neighborhood selection)\cite{Meinshausen_Graphs_2006}.
We are particularly interested in the latter approach, which can be interpeted as a pseudo-likelihood method for precision matrix estimation. Its key advantages are its low computational complexity compared to the GLasso and the possibility of massively parallelizing the procedure. It can also be applied in high-dimensional settings for sparse graphs, unlike partial correlation testing since the sample covariance matrix is singular for $p>n$\cite{Meinshausen_Graphs_2006,Drton2004}.

Let $\bm X = (\boldsymbol{x}_1 \dots \boldsymbol{x}_p)\in \mathbb{R}^{n\times p}$ be the data matrix with $n$ i.i.d. observations from a $p$-dimensional zero-mean Gaussian distribution $\mathcal{N}_p(\bm 0,\bm\Sigma)$, and let $\mathcal{G}=(\mathcal{E},\mathcal{V})$ be the associated undirected graph. In neighborhood selection, one considers $p$ independent nodewise regression problems
\begin{align}
\label{eq:nodewisereg}
    \bm x_i = \bm X_{\mathcal{I}}\bm \beta_i+\bm\epsilon_i, \:\: i=1,\dots,p,
\end{align}
where $\bm X_{\mathcal{I}}$ is the column submatrix that results from keeping only those columns in $\bm X$ whose indices are in the index set $\mathcal{I}=\mathcal{V}\backslash i$. It is well-known that $(i,j)\notin\mathcal{E}$ if and only if $[\bm\beta_i]_j = [\bm\beta_j]_i = 0$, where $[\cdot]_i$ and $[\cdot]_{ij}$ denote the $i$th and $ij$th entry of a vector and a matrix, respectively. Assuming that $|\mathcal{E}|\ll p^2$, i.e., $\mathcal{G}$ has a sparse structure, the problem can thus be related to that of $p$ independent variable selection problems under sparsity constraints. The estimate $\widehat{\mathscr{N}}_i$ of the true neighborhood $\mathscr{N}_i$ associated with node $i\in\mathcal{V}$ can then be deduced by applying the OR-rule or the AND-rule \cite{Meinshausen_Graphs_2006}.

\vspace{-7.5pt}
\subsection{The Terminating-Random Experiments Framework}
\label{ssec:screen_t_rex}
\vspace{-5pt}
Recently, the Terminating-Random Experiments (T-Rex) selector \cite{machkour_trex_2022}, a framework for fast high-dimensional variable selection with provable $\fdr$ control has been proposed. The $\fdr$ is controlled (see Theorem 1 of \cite{machkour_trex_2022}) by conducting multiple early terminated random experiments, in which computer-generated dummy variables compete with real variables in a forward variable selection, and  subsequently fusing the resulting candidate sets. The dummies are drawn from any univariate probability distribution with finite mean and variance, e.g., a standard normal distribution (see Theorem 2 of \cite{machkour_trex_2022}). The computational complexity of the T-Rex is $O(np)$ and it scales to high-dimensional variable selection problems with millions of variables.

\textbf{Screen-T-Rex:}
In \cite{machkour_screen_2023}, the Screen-T-Rex selector, a computationally cheap variant of the T-Rex selector, has been proposed for fast screening of large-scale data banks. In contrast to the standard T-Rex selector, which calibrates its parameters such that the number of selected variables is maximized while controlling the $\fdr$ at an {\em a priori} defined target level, the Screen-T-Rex selector stops the forward selection in each random experiment as soon as one dummy variable has been selected and outputs a self-estimated conservative $\fdr$ estimate (see Theorem 1 of \cite{machkour_screen_2023}).

\vspace{-7.5pt}
\subsection{FDR Controlled GGMs: Screening Neighborhoods}
\vspace{-5pt}
Consider the nodewise regression model in \eqref{eq:nodewisereg}. We propose to solve the $p$ variable selection problems with the Screen-T-Rex selector, which conducts $K$ random experiments, resulting in candidate sets $\{\mathcal{C}_{k,i}\}_{k =1}^K$ for every neighborhood $\mathscr{N}_i$. We denote the relative occurrences of each edge by $\Phi(i,j) \coloneqq  \sum_{k = 1}^{K} \mathbbm{1}_{k}(i,j)/K,$ where $\mathbbm{1}_{k}(i,j) = 1 $ if $ j \in \Cc_{k,i}$, and $\mathbbm{1}_{k}(i,j) = 0 $ otherwise. The estimated neighborhood of node $i \in \mathcal{V}$ is obtained by thresholding the relative occurences, i.e., $\widehat{\mathscr{N}}_i=\{(i,j):\Phi(i,j)>0.5\}$ (see \cite{machkour_screen_2023}). For each $\widehat{\mathscr{N}}_i$, let us now denote the number of falsely selected nodes by $V_{i} \coloneqq \big| \widehat{\mathscr{N}}_i\:\:\backslash\mathscr{N}_i \big|$, the number of correctly selected nodes by $S_{i} \coloneqq \big|\widehat{\mathscr{N}}_i\:\:\cap\mathscr{N}_i \big|$, and the total number of selected nodes by  $R_{i} \coloneqq V_{i} + S_{i}.$ For every node $i$, we have a total of $p-1 = p_{0,i}+p_{1,i}$ variables to select from, where $p_{1,i}$ and $p_{0,i}$ are the total number of true active and null variables, respectively.

The number of selected null variables $V_i$ in every neighborhood estimate $\widehat{\mathscr{N}}_i$ is a random variable that is stochastically dominated by a random variable following the negative hypergeometric distribution $\mathrm{NHG}(p_{0,i} + p -1, p_{0,i}, 1)$ with $(p_{0,i}+ p - 1)$ total elements, $p_{0,i}$ success elements, and one failure after which a random experiment is terminated. It describes the process of randomly picking variables without replacement, with equal probability, and one at a time, from the combined set of $p_{0,i}$ null and $p-1$ dummy variables. This process adequately describes the forward variable selection process considered in the T-Rex framework \cite{machkour_trex_2022}. Thus, the expectation of $V_i$ can be upper-bounded as $\expect[V_i]\leq p_{0,i}/(p-1 + 1)$, where the right hand side of the inequality is the expected value of $\mathrm{NHG}(p_{0,i} + p -1, p_{0,i}, 1)$ (see \cite{machkour_trex_2022} for more details).
\begin{algorithm}[b]
    \caption{Neighborhood Screening}
    \label{algo:unsymmetric}
    \hspace*{\algorithmicindent}\textbf{Input:}
    $\bm X = [\boldsymbol{x}_1,\dots,\boldsymbol{x}_p] \in \mathbb{R}^{n\times p},\:\widehat{\mathcal{E}}=\varnothing$ 
    \begin{algorithmic}[1]
        \State\textbf{for $i=1,\dots,p$} 
        \State\hspace{\algorithmicindent}$\{\Phi(i,j)\}_{j=1}^p \gets$ Run Screen-T-Rex\cite{machkour_screen_2023} on the\Statex\hspace{\algorithmicindent}  regression problem: $\:\bm x_i = \bm X_{\mathcal{I}}\bm \beta_i+\bm\epsilon_i$ (see \eqref{eq:nodewisereg})
        \State\hspace{\algorithmicindent} $\:\widehat{\mathcal{E}}\gets\widehat{\mathcal{E}}\cup\{(i,j):\Phi(i,j) > 0.5\}$
        \State$\hat{\alpha} \gets p/(|\widehat{\mathcal{E}}|\vee 1)$ (Determine $\fdr$ estimate)
    \end{algorithmic}
    \hspace*{\algorithmicindent} \textbf{Output:} Selected edge set $\widehat{\mathcal{E}}$, estimated $\fdr$ $\hat{\alpha}$
\end{algorithm}

Note that in the true and estimated edge sets, i.e., $\mathcal{E}$ and $\widehat{\mathcal{E}}$, we count $(i,j)\in \mathcal{V}^2$ and $(j,i)\in \mathcal{V}^2$ as individual edges.
Our focus is on $\fdr$ control for the (potentially undirected) estimate $\widehat{\mathcal{E}}=\{(i,j): j\in\widehat{\mathscr{N}}_i\:\,\}_{i=1}^p$ that results from directly combining the individual neighborhoods. Hence, there is a one-to-one map between the $\widehat{\mathscr{N}}_1,\dots,\widehat{\mathscr{N}}_p$ and $\widehat{\mathcal{E}}$ and we have $V=|\widehat{\mathcal{E}}\backslash\mathcal{E}|=\sum_{i=1}^pV_i$ and $R = |\widehat{\mathcal{E}}|= \sum_{i=1}^pR_i$.
Our $\fdr$ estimator is then given by $\hat{\alpha} := \sum_{i=1}^p 1 /(R \vee 1)= p /(R \vee 1)$. We summarize the procedure in Algorithm~1.
\begin{thrm}   
\label{thrm:1}
Let $\hat{\alpha}:=p/(R\vee1)$. Algorithm~\ref{algo:unsymmetric} controls the $\fdr$ at the estimated level $\hat{\alpha}$, i.e., $\fdr = \expect[\fdp] \leq \hat{\alpha}$.
\end{thrm}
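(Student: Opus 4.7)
The plan is to first bound the expected total number of false discoveries $\expect[V]$, and then to translate this bound into the desired statement $\fdr \leq \hat\alpha$, which I would read in the natural expectation sense $\fdr \leq \expect[\hat\alpha]$ since $\hat\alpha = p/(R\vee 1)$ is itself random.

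First I would apply the NHG stochastic-dominance argument from the paragraph preceding the theorem to each of the $p$ nodewise Screen-T-Rex runs, obtaining $\expect[V_i] \leq p_{0,i}/p$. Because the paper counts $(i,j)$ and $(j,i)$ as distinct ordered edges, $\{\widehat{\mathscr{N}}_i\}_{i=1}^p$ is in one-to-one correspondence with $\widehat{\mathcal{E}}$, so $V = \sum_{i=1}^p V_i$. Linearity and $p_{0,i}\leq p-1$ then yield
\begin{equation*}
    \expect[V] \;=\; \sum_{i=1}^p \expect[V_i] \;\leq\; \sum_{i=1}^p \frac{p_{0,i}}{p} \;\leq\; \frac{p(p-1)}{p} \;<\; p.
\end{equation*}

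Next, I would condition on $R$ and use the tower property,
\begin{equation*}
    \fdr \;=\; \expect\!\left[\frac{V}{R\vee 1}\right] \;=\; \expect\!\left[\frac{\expect[V\mid R]}{R\vee 1}\right],
\end{equation*}
so the conclusion reduces to the conditional version $\expect[V\mid R]\leq p$. Because the Screen-T-Rex runs use independently generated dummies across nodes, and the NHG dominance relies only on the symmetry between null variables and dummies inside a single run, the per-node bound transfers to the conditional setting; summing gives $\expect[V\mid R]\leq p$ and hence $\fdr \leq \expect[p/(R\vee 1)] = \expect[\hat\alpha]$.

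The hard part will be this conditional step: within any single Screen-T-Rex run the pair $(V_i, R_i)$ is strongly coupled (both are functionals of the same forward-selection trajectory), so one must argue carefully that conditioning on the global $R = \sum_j R_j$ does not disturb the per-node NHG bound. Mutual independence of the dummy draws across the $p$ runs is the key ingredient that makes this go through; everything else is straightforward summation and linearity of expectation.
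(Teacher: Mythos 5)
Your first half (the NHG dominance bound $\expect[V_i]\leq p_{0,i}/p$, the ordered-edge identification $V=\sum_i V_i$, and the summation $\expect[V]\leq p$) matches the paper exactly. The divergence, and the problem, is the second half. The paper does not condition on $R$ at all: it invokes its footnote that, for the fixed voting level $v=0.5$, $R$ is observable and is treated as deterministic, so $1/(R\vee 1)=\hat{\alpha}/p$ is simply pulled out of the expectation, after which the unconditional bounds $\expect[V_i]\leq p_{0,i}/p\leq 1$ finish the argument. Your reading of the claim as $\fdr\leq\expect[\hat{\alpha}]$ is a reasonable way to make the statement rigorous, but the route you propose to it has a genuine gap: the step $\expect[V\mid R]\leq p$ does not follow from $\expect[V]\leq p$, and it is false in general. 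Conditioning on the total $R=\sum_j R_j$ is informative about $V$: in the extreme case of a graph with no true edges every selection is false, so $V=R$ and $\expect[V\mid R]=R$, which exceeds $p$ on the event $\{R>p\}$ (nothing caps $R_i$ at one selection per node, since the majority vote can retain several variables). Independence of the dummy draws across the $p$ nodewise runs does not rescue this, because the obstruction is not cross-node dependence but the within-node coupling of $(V_i,R_i)$ that you yourself flagged, together with the fact that conditioning on the sum $R$ is informative about each $R_i$ and hence about each $V_i$.

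So the "hard part" you identified is indeed where the argument breaks, and it cannot be patched by the symmetry-of-dummies argument alone; one would need something like a bound on $\expect\!\left[V/(R\vee 1)\right]$ directly (as martingale/knockoff-style FDR proofs do), which is a different and substantially harder argument than summing per-node expectations. If you instead adopt the paper's convention --- treat $R$ (hence $\hat{\alpha}$) as a fixed observed quantity, as its footnote does --- then your first half already completes the proof: $\expect\!\left[V/(R\vee1)\right]=\frac{\hat{\alpha}}{p}\sum_{i=1}^p\expect[V_i]\leq\frac{\hat{\alpha}}{p}\cdot p=\hat{\alpha}$. You should be aware, though, that this convention is doing real work in the paper's proof, and your instinct that the fully rigorous statement requires handling the randomness of $R$ is sound; it just is not achieved by the tower-property step as written.
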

\begin{proof}
By definition of the $\fdr$, we have
    \begin{align*}
        \fdr &\coloneqq \expect\left[\fdp\right]=\expect\left[\frac{V}{R\vee1}\right]=\expect\left[\frac{\sum_{i=1}^p V_{i}}{R\vee1}\right]\\&=\frac{\hat{\alpha}}{p}\sum_{i=1}^p \expect\left[V_{i}\right]\leq\frac{\hat{\alpha}}{p}\,\frac{\sum_{i=1}^p p_{0,i}}{\left[(p-1)+1\right]}\leq\hat{\alpha}\frac{\sum_{i=1}^p p}{p^2}=\hat{\alpha},
    \end{align*}
where the second inequality follows from $V_{i}$ being stochastically dominated by the negative hypergeometric distribution $\mathrm{NHG}(p_{0,i}+p-1, p_{0,i},1)$ with $\expect[V_{i}] \leq p_{0,i}/p$.\footnote{Note that $R$ becomes observable for any fixed voting level, rendering it deterministic, thus $\hat{\alpha}(v = 0.5)$ is also deterministic, which allows us to move it out of the expectation.}
\end{proof}
Intuitively, it is clear that $\hat{\alpha}$ is conservative, since only one out of $p-1$ dummies is allowed to enter the solution paths of the $K$ random experiments for every $i$ before terminating the forward selection processes. We thus expect on average no more than one out of at most $p-1$ null variables to be included in each candidate set $\mathcal{C}_{k,i}, k =1,\dots,K$. Therefore, at most $p$ null variables are on average expected to be included among all selected variables.
\vspace{-10pt}
\subsection{Obtaining an Undirected Graph}
\vspace{-5pt}
It is clear that conditional independence graphs like the GGM are undirected. Therefore, the selected neighborhoods have to be fused in a way that produces an undirected graph. Unlike previous approaches that rely on the AND and OR rule \cite{Meinshausen_Graphs_2006}, we take a different approach. As previously described, every edge $(i,j)$ in the estimated neighborhood $\widehat{\mathscr{N}}_i$ of a node is assigned its relative occurence $\Phi(i, j)$, and is only selected if $\Phi(i,j) > 0.5$. Instead of counting $(i,j)$ and $(j,i)$ individually, they are assigned a joint relative occurence $\Phi_{{\mathrm{joint}}}(j,i)=\Phi_{\mathrm{joint}}(i,j)\coloneqq(\Phi(i,j)+\Phi(j,i))/2$. Thus, the undirected estimate of the edge set is given by $\widehat{\mathcal{E}}_\Phi = \{(i,j):\Phi_{\mathrm{joint}}(i,j)>0.5\}$. The method is summarized in Algorithm~\ref{algo:symmetric}.
\begin{algorithm}[h]
    \caption{Undirected Neighborhood Screening}
    \label{algo:symmetric}
    \hspace*{\algorithmicindent}\textbf{Input:}
    $\bm X = [\boldsymbol{x}_1,\dots,\boldsymbol{x}_p] \in \mathbb{R}^{n\times p}$ 
    \begin{algorithmic}[1]
        \State\textbf{for $i=1,\dots,p$} 
        \State\hspace{\algorithmicindent}$\{\Phi(i,j)\}_{j=1}^p \gets$ Run Screen-T-Rex\cite{machkour_screen_2023} on the\Statex\hspace{\algorithmicindent}  regression problem: $\:\bm x_i = \bm X_{\mathcal{I}}\bm \beta_i+\bm\epsilon_i$ (see \eqref{eq:nodewisereg})
        \State $\{\Phi_{\mathrm{joint}}(i,j)\}_{i,j=1}^p\gets\{(\Phi(i,j)+\Phi(j,i))/2\}_{i,j=1}^p$
        \State $\widehat{\mathcal{E}}_\Phi\gets\{(i,j)\in\mathcal{V}^2:\Phi_{\mathrm{joint}}(i,j) > 0.5\}$
        \State$\hat{\alpha} \gets p/(|\widehat{\mathcal{E}_\Phi}|\vee 1)$ (Determine $\fdr$ estimate)
    \end{algorithmic}
    \hspace*{\algorithmicindent} \textbf{Output:} Selected edge set $\widehat{\mathcal{E}}_\Phi$, estimated $\fdr$ $\hat{\alpha}$
\end{algorithm}
\vspace{10pt}

\vspace{-15pt}
\section{Numerical Simulations}
\label{sec:simul}
 \vspace{-10pt}
\begin{figure}[h]
    \centering
    \includegraphics[width=\linewidth]{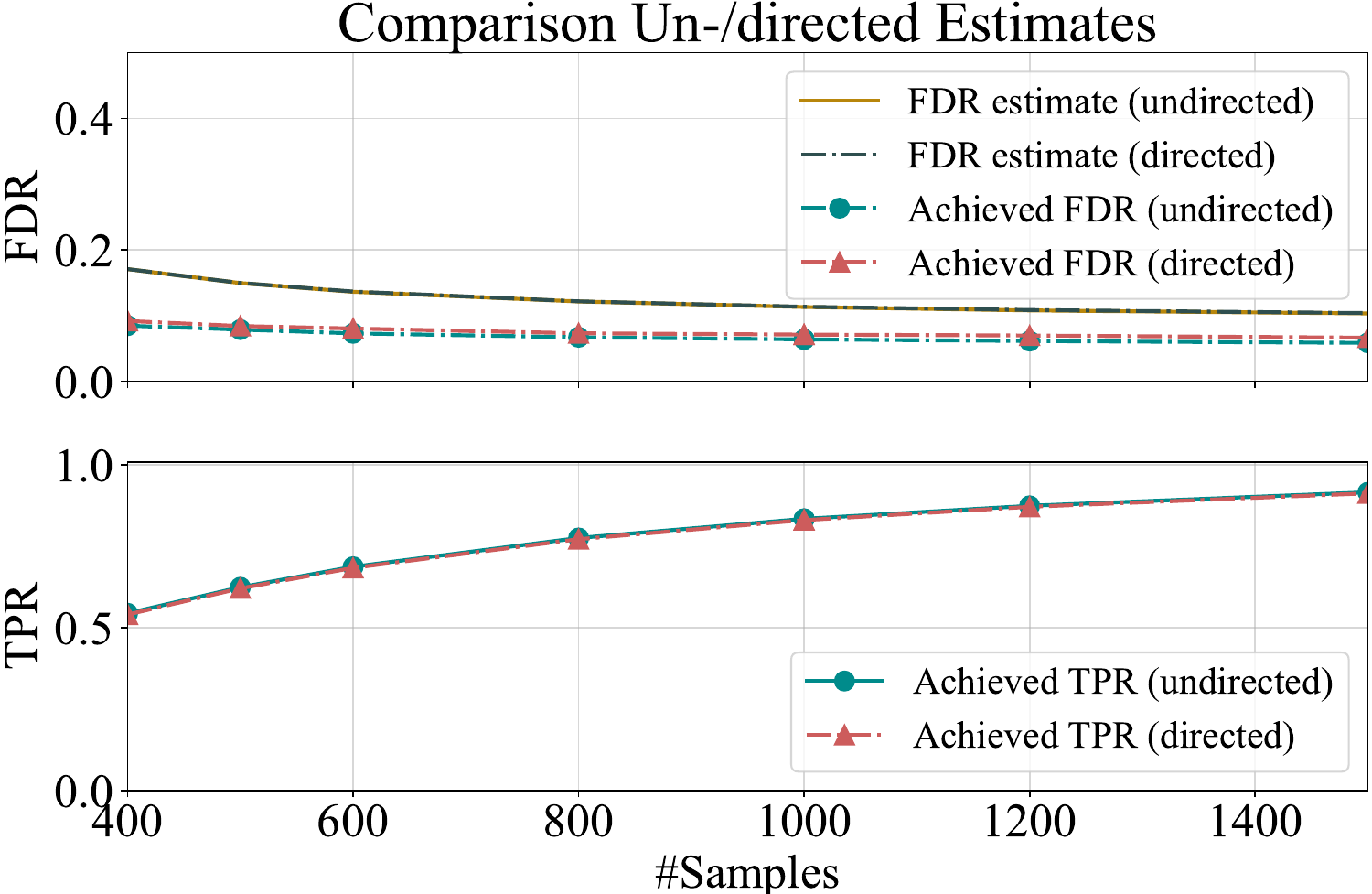}
    \caption{\small Comparison of the performance of Algorithm~\ref{algo:unsymmetric} and \ref{algo:symmetric} on an ER graph with an edge probability of $10\%$ and partial correlations $|\rho_{ij}|\in[0.2,0.6]$. The sample size varies between $400$ and $1500$. Both methods show a similar performance, except for a slightly smaller achieved $\fdr$ of the undirected graph estimator.}
    \label{fig:sym_unsym}
\end{figure}
In this section, we evaluate our proposed method in numerical simulations. First, we compare the performance of Algorithms~\ref{algo:unsymmetric} and~\ref{algo:symmetric} in terms of $\tpr$  and $\fdr$. Then, Algorithm~2 is benchmarked against other $\fdr$ controlling methods for structure estimation of GGMs. 
In all experiments, we draw $n$ independent samples from a multivariate Gaussian distribution $\mathcal{N}(\bm 0, \bm\Sigma)$, where the structure of the precision matrix $\bm\Omega$ follows an undirected graph $\mathcal{G}$ with adjacency matrix $\bm A$, i.e., $\bm A_{i j}=1$ if $(i,j)\in\mathcal{E}$ and $\bm A_{i j}=0$ otherwise. To ensure positive definiteness of $\bm\Omega$, we follow the approach of \cite{Maathuis_GKF_2021} and let $\bm\Omega \coloneqq \bm \Omega_0 + ( |\lambda_{\min}(\bm \Omega_0)| + 0.5)\bm I$, where $\lambda_{\min}(\cdot)$ denotes the minimum eigenvalue of a matrix. Given $\bm A$, we construct  $\bm\Omega_0$ in every setting as follows: $[\bm \Omega_0]_{i i}=1$ and $[\bm \Omega_0]_{i j}=\rho_{i j}\bm A_{i j}$ with $i\neq j,$ where all $\rho_{i j}$ are independently drawn from the uniform distribution on $[-0.6,-0.2]\cup[0.2,0.6]$ and $\rho_{i j}=\rho_{j i}$. In all experiments, we constrain ourselves to $p=100$ variables and $100$ Monte Carlo runs per setting.
\vspace{-7.5pt}
\subsection{Comparison of Algorithms~\ref{algo:unsymmetric} and \ref{algo:symmetric}}
\label{ssec:comp_alg12}
\vspace{-5pt}
We compare the two proposed methods in term of $\fdr$ and $\tpr$
for an Erd\H{o}s-Rényi (ER) model with adjacency matrix $\bm A$, where all elements $\bm A_{i j} = \bm A_{j i}$ with $ i\neq j$ are independent Bernoulli random variables with $\mathbb{P}[\bm A_{i j}=1]=0.1$. We evaluate the methods by varying the sample size $n$ between $400$ and $1500$. The results are presented in Fig.~\ref{fig:sym_unsym}, where we observe that the $\fdr$ estimates as well as the $\tpr$ estimates of both methods are virtually identical, and that the undirected graph estimator achieves a slightly smaller $\fdr$ than Algorithm~\ref{algo:unsymmetric}. As suggested by Theorem~\ref{thrm:1}, both methods also empirically control the $\fdr$ at the self-estimated level.
\vspace{-7.5pt}
\subsection{Benchmark on Varying Topologies}
\vspace{-5pt}
\begin{figure*}
    \includegraphics[width=0.33\linewidth]{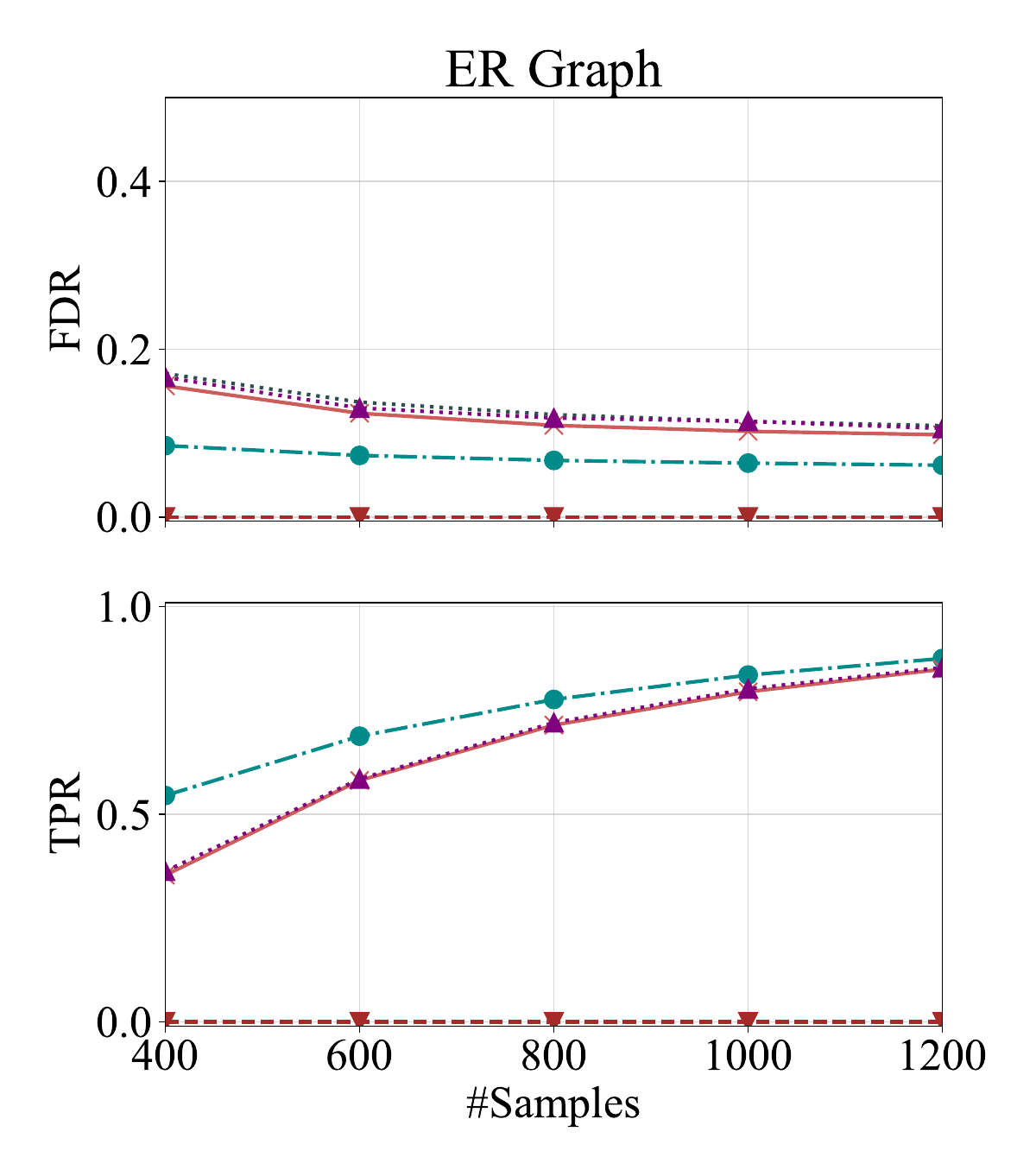}
    \includegraphics[width=0.33\linewidth]{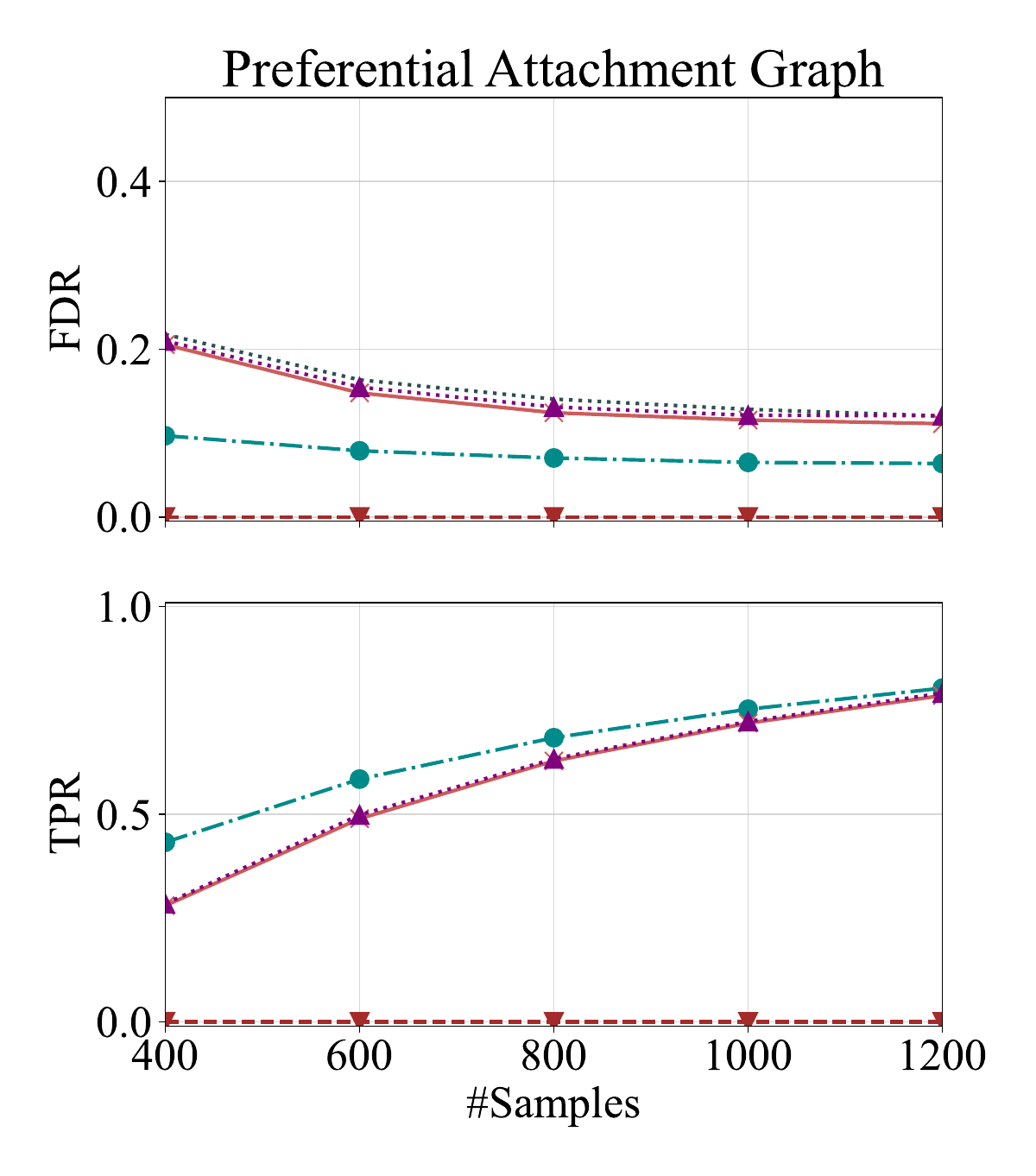}
    \includegraphics[width=0.33\linewidth]{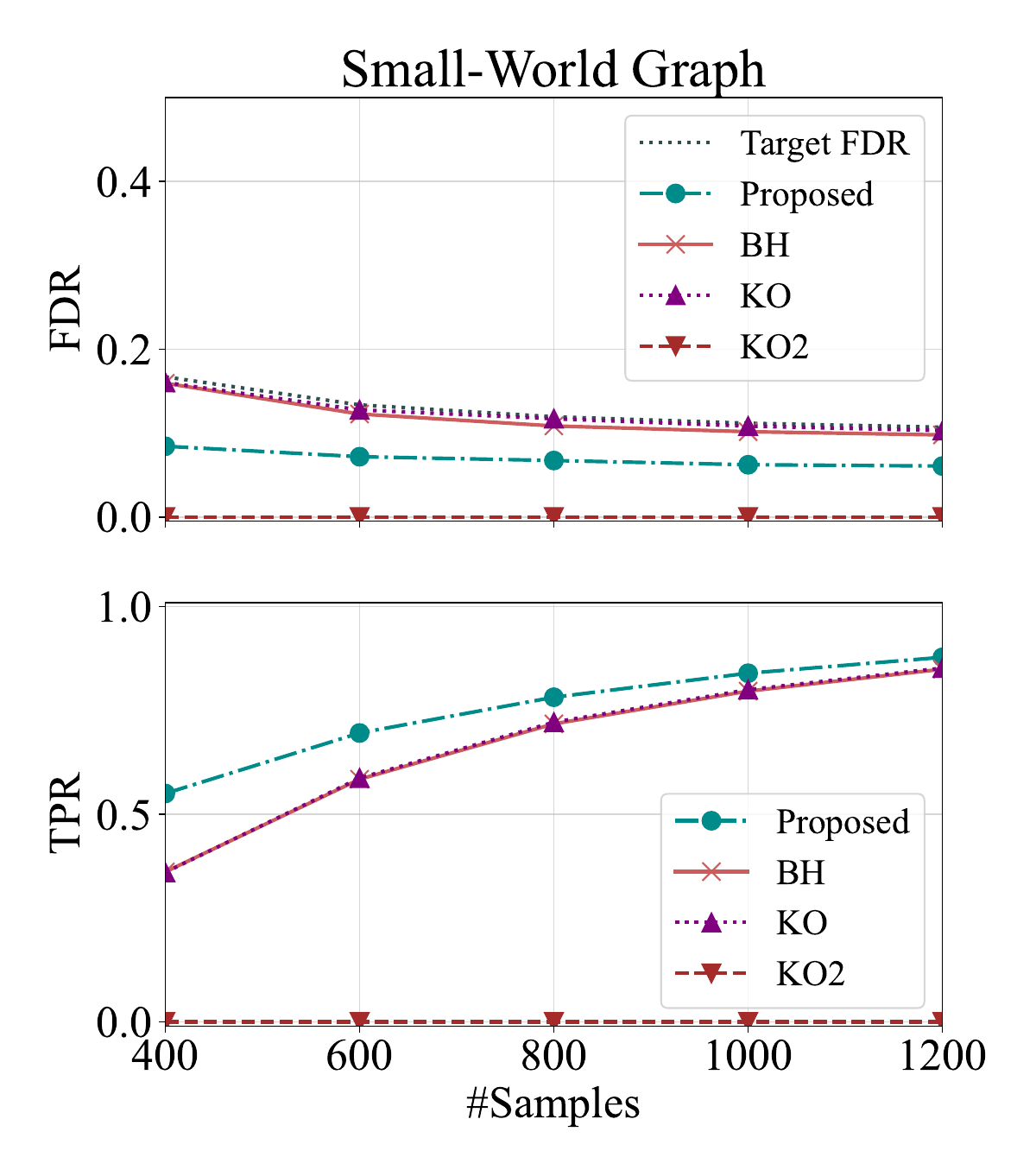}
    \caption{\small Comparison of the performance of Algorithm~\ref{algo:symmetric}, BH, KO and KO2 on an ER graph with an edge probability of $10\%$(left), a sub-linear preferential attachment graph with growth constant $m=5$ and a power law exponent of $0.5$ (middle), and a small-world graph with $2D=10$ neighbors per node and a rewiring probability of $0.5$ (right). Compared to the competing methods, the proposed method shows a significant gain in performance in all experiments.}
    \label{fig:simulation_graphs}
\end{figure*}
In the following, we benchmark our proposed method against competing methods that control the $\fdr$ in GGM structure estimation. In particular, the Benjamini-Hochberg (BH) method \cite{BH_FDR_1995} and two methods based on knockoffs proposed in \cite{Yu_FDR_bio_2021} (KO) and \cite{Maathuis_GKF_2021} (KO2). The BH method is applied to the p-values that are obtained from two-sided tests for zero partial correlations, while the KO method constructs knockoffs of edges that mimic the partial correlation structure of the data and KO2 applies the knockoff methodology to the nodewise regression approach of \cite{Meinshausen_Graphs_2006}. The precision matrix is obtained from a symmetric adjacency matrix, as previously described. 

We consider three different graph topologies: i) an ER model with edge probability of $10\%$; ii) a graph based on the Barabási-Albert model \cite{Barabasi1999}, which exhibits continuous growth by adding new vertices that preferentially attach to well-connected sites. Here, we consider sublinear preferential attachment, specifically a power law distribution with an exponent less than $1$. Sublinear growth leads to networks with stretched-exponential degree distribution, which are good models for, e.g., protein interaction networks \cite{Stumpf2005} or the neuronal network of {\em Caenorhabditis (C.) elegans} \cite{Varshney2011}. In our case, we set the power to $1/2$, and each growth step adds $m=5$ edges to the graph. iii) Lastly, we examine small-world graphs using the Watts-Strogatz model \cite{Watts1998}. Initially, a ring lattice with $p$ nodes is created, where each node connects to its $D$ rightmost neighbors, ensuring that $0 < (j-i)\mathrm{mod}(p)\leq D$ (where $\mathrm{mod}$ is the modulo operator), and resulting in $2D$ neighbors per node. Subsequently, edges are rewired with some probability, replacing $(i,j)$ with $(i,\ell)$, where $\ell$ is chosen uniformly at random from all nodes except $i$, while avoiding duplicate edges. The neuronal network of {\em C. elegans} exhibits small-world properties as well \cite{Varshney2011,Newman_2000}, but small-world graphs also emerge in several other real-world networks, such as collaboration networks between actors or the western power grid of the United States \cite{Newman_2000}. Here, we set the rewiring probability to $50\%$ and the number of neighbors per node to $2D = 10$. 

The results for varying sample size ($n$  from $400$ to $1200$), are shown in Fig.~\ref{fig:simulation_graphs}. In order to ensure a fair comparison, the target level of the competing methods is set to be the $\hat{\alpha}$ resulting from Algorithm~\ref{algo:symmetric}. For all considered topologies, we observe that each method empirically controls the $\fdr$ at the target level. The proposed method shows superior performance in terms of the $\tpr$ in all experiments, especially for smaller sample sizes, while being more conservative in its achieved $\fdr$. The results of the KO method and the BH method deviate only slightly in the achieved $\fdr$, and are virtually identical for the $\tpr$. In contrast, the KO2 method does not select any edges in all of the experiments.
\vspace{-10pt}
\vfill
\vspace{-5pt}
\section{Conclusion}
\label{sec:conc}
\vspace{-5pt}
We have presented a novel FDR controlling method for graphical models based on a nodewise variable selection approach. A novel fusion process of the individual neighborhoods yields an undirected graph estimate, which has proven to be useful for $\fdr$ controlled GGM estimation in numerical experiments. Benchmarks against competing methods show a significant gain in performance, making it a promising new graph structure estimator for Gaussian graphical models.

\vfill\clearpage\small
\bibliographystyle{IEEEbib}
\bibliography{bibliography.bib}

\begin{thebibliography}{10}

\bibitem{Bessonneau2021}
Vincent~Bessonneau et~al.,
\newblock ``Gaussian graphical modeling of the serum exposome and metabolome reveals interactions between environmental chemicals and endogenous metabolites,''
\newblock {\em Scientific Reports}, vol. 11, no. 1, Apr. 2021.

\bibitem{Iqbal2016}
Khalid~Iqbal et~al.,
\newblock ``Gaussian graphical models identify networks of dietary intake in a german adult population,''
\newblock {\em The Journal of Nutrition}, vol. 146, no. 3, pp. 646--652, Mar. 2016.

\bibitem{Zerenner2014}
Tanja Zerenner, Petra Friederichs, Klaus Lehnertz, and Andreas Hense,
\newblock ``A gaussian graphical model approach to climate networks,''
\newblock {\em Chaos: An Interdisciplinary Journal of Nonlinear Science}, vol. 24, no. 2, Apr. 2014.

\bibitem{Norbury2020}
Agnes~Norbury et~al.,
\newblock ``Social media and smartphone app use predicts maintenance of physical activity during covid-19 enforced isolation in psychiatric outpatients,''
\newblock {\em Molecular Psychiatry}, vol. 26, no. 8, pp. 3920--3930, Dec. 2020.

\bibitem{Bhushan2019}
Nitin~Bhushan et~al.,
\newblock ``Using a gaussian graphical model to explore relationships between items and variables in environmental psychology research,''
\newblock {\em Frontiers in Psychology}, vol. 10, May 2019.

\bibitem{Ortega2018}
Antonio Ortega, Pascal Frossard, Jelena Kovačević, José M.~F. Moura, and Pierre Vandergheynst,
\newblock ``Graph signal processing: Overview, challenges, and applications,''
\newblock {\em Proceedings of the IEEE}, vol. 106, no. 5, pp. 808--828, 2018.

\bibitem{Mateos2019}
Gonzalo Mateos, Santiago Segarra, Antonio~G. Marques, and Alejandro Ribeiro,
\newblock ``Connecting the dots: Identifying network structure via graph signal processing,''
\newblock {\em IEEE Signal Processing Magazine}, vol. 36, no. 3, pp. 16--43, 2019.

\bibitem{Maretic2017}
Hermina~Petric Maretic, Dorina Thanou, and Pascal Frossard,
\newblock ``Graph learning under sparsity priors,''
\newblock in {\em IEEE International Conference on Acoustics, Speech and Signal Processing (ICASSP)}, 2017, pp. 6523--6527.

\bibitem{Mei2016}
Jonathan Mei and José M.~F. Moura,
\newblock ``Signal processing on graphs: Performance of graph structure estimation,''
\newblock in {\em IEEE International Conference on Acoustics, Speech and Signal Processing (ICASSP)}, 2016, pp. 6165--6169.

\bibitem{Giannakis2018}
Georgios~B. Giannakis, Yanning Shen, and Georgios~Vasileios Karanikolas,
\newblock ``Topology identification and learning over graphs: Accounting for nonlinearities and dynamics,''
\newblock {\em Proceedings of the IEEE}, vol. 106, no. 5, pp. 787--807, 2018.

\bibitem{Miettinen2021}
Jari Miettinen, Sergiy~A. Vorobyov, and Esa Ollila,
\newblock ``Modelling and studying the effect of graph errors in graph signal processing,''
\newblock {\em Signal Processing}, vol. 189, pp. 108256, Dec. 2021.

\bibitem{Shirley2005}
Mark~D.F. Shirley and Steve~P. Rushton,
\newblock ``The impacts of network topology on disease spread,''
\newblock {\em Ecological Complexity}, vol. 2, no. 3, pp. 287--299, Sept. 2005.

\bibitem{Drakesmith2015}
M.~Drakesmith, K.~Caeyenberghs, A.~Dutt, G.~Lewis, A.S. David, and D.K. Jones,
\newblock ``Overcoming the effects of false positives and threshold bias in graph theoretical analyses of neuroimaging data,''
\newblock {\em {NeuroImage}}, vol. 118, pp. 313--333, Sept. 2015.

\bibitem{Drton2004}
M.~Drton and M.~D. Perlman,
\newblock ``Model selection for gaussian concentration graphs,''
\newblock {\em Biometrika}, vol. 91, no. 3, pp. 591--602, Sept. 2004.

\bibitem{Yuan_GLasso_2007}
Ming Yuan and Yi~Lin,
\newblock ``{Model selection and estimation in the Gaussian graphical model},''
\newblock {\em Biometrika}, vol. 94, no. 1, pp. 19--35, 03 2007.

\bibitem{Friedman_GLasso_2007}
Jerome Friedman, Trevor Hastie, and Robert Tibshirani,
\newblock ``{Sparse inverse covariance estimation with the graphical lasso},''
\newblock {\em Biostatistics}, vol. 9, no. 3, pp. 432--441, 12 2007.

\bibitem{Meinshausen_Graphs_2006}
Nicolai Meinshausen and Peter Bühlmann,
\newblock ``High-dimensional graphs and variable selection with the lasso,''
\newblock {\em The Annals of Statistics}, vol. 34, no. 3, pp. 1436--1462, 2006.

\bibitem{BH_FDR_1995}
Yoav Benjamini and Yosef Hochberg,
\newblock ``Controlling the false discovery rate: A practical and powerful approach to multiple testing,''
\newblock {\em Journal of the Royal Statistical Society. Series B (Methodological)}, vol. 57, no. 1, pp. 289--300, 1995.

\bibitem{Weidong2013}
Weidong Liu,
\newblock ``{Gaussian graphical model estimation with false discovery rate control},''
\newblock {\em The Annals of Statistics}, vol. 41, no. 6, pp. 2948 -- 2978, 2013.

\bibitem{Maathuis_GKF_2021}
Jinzhou Li and Marloes~H. Maathuis,
\newblock ``{GGM Knockoff Filter: False Discovery Rate Control for Gaussian Graphical Models},''
\newblock {\em Journal of the Royal Statistical Society Series B: Statistical Methodology}, vol. 83, no. 3, pp. 534--558, 07 2021.

\bibitem{Yu_FDR_bio_2021}
Lu~Yu, Tobias Kaufmann, and Johannes Lederer,
\newblock ``False discovery rates in biological networks,''
\newblock in {\em Proceedings of The 24th International Conference on Artificial Intelligence and Statistics}, Arindam Banerjee and Kenji Fukumizu, Eds. 13--15 Apr 2021, vol. 130 of {\em Proceedings of Machine Learning Research}, pp. 163--171, PMLR.

\bibitem{Li2008}
Junning Li, Z.~Jane Wang, and Martin~J. McKeown,
\newblock ``Controlling the false discovery rate in modeling brain functional connectivity,''
\newblock in {\em IEEE International Conference on Acoustics, Speech and Signal Processing}, 2008, pp. 2105--2108.

\bibitem{machkour_trex_2022}
Jasin Machkour, Michael Muma, and Daniel~P. Palomar,
\newblock ``The terminating-random experiments selector: Fast high-dimensional variable selection with false discovery rate control,'' 2022,
\newblock https://doi.org/10.48550/arXiv.2110.06048.

\bibitem{machkour_screen_2023}
Jasin Machkour, Michael Muma, and Daniel~P. Palomar,
\newblock ``False discovery rate control for fast screening of large-scale genomics biobanks,''
\newblock in {\em IEEE Statistical Signal Processing Workshop (SSP)}, 2023, pp. 666--670.

\bibitem{machkour2022TRexSelector}
Jasin Machkour, Simon Tien, Daniel~P. Palomar, and Michael Muma,
\newblock {\em {TRexSelector: T-Rex Selector: {H}igh-Dimensional Variable Selection \& FDR Control}}, 2022,
\newblock {R} package version 0.0.1.

\bibitem{Barabasi1999}
Albert-László Barabási and Réka Albert,
\newblock ``Emergence of scaling in random networks,''
\newblock {\em Science}, vol. 286, no. 5439, pp. 509--512, Oct. 1999.

\bibitem{Stumpf2005}
M.~P.~H Stumpf and P.~J Ingram,
\newblock ``Probability models for degree distributions of protein interaction networks,''
\newblock {\em Europhysics Letters ({EPL})}, vol. 71, no. 1, pp. 152--158, July 2005.

\bibitem{Varshney2011}
Lav~R. Varshney, Beth~L. Chen, Eric Paniagua, David~H. Hall, and Dmitri~B. Chklovskii,
\newblock ``Structural properties of the caenorhabditis elegans neuronal network,''
\newblock {\em {PLoS} Computational Biology}, vol. 7, no. 2, pp. e1001066, Feb. 2011.

\bibitem{Watts1998}
Duncan~J. Watts and Steven~H. Strogatz,
\newblock ``Collective dynamics of `small-world' networks,''
\newblock {\em Nature}, vol. 393, no. 6684, pp. 440--442, June 1998.

\bibitem{Newman_2000}
M.~E.~J. Newman,
\newblock ``Models of the small world,''
\newblock {\em Journal of Statistical Physics}, vol. 101, no. 3/4, pp. 819--841, 2000.

\end{thebibliography}

\end{document}